\newtheorem{theorem}{Theorem}
\begin{document}
\title{An Intelligent System to Detect, Avoid and Maintain Potholes: A Graph Theoretic Approach}

\author{
 \IEEEauthorblockN{Shreyas Balakuntala\IEEEauthorrefmark{1}, Sandeep Venkatesh\IEEEauthorrefmark{2}}
 \IEEEauthorblockA{M S Ramaiah Institute of Technology, Bangalore, India 560-054\\}
\IEEEauthorblockA{\IEEEauthorrefmark{1}shreyasbs1@yahoo.com, \IEEEauthorrefmark{2}vsandeepnaidu@gmail.com
}\\

}
\maketitle

\begin{abstract}
 In this paper, we propose a conceptual framework where a centralized system detects and assists the driver to avoid potholes on roads. The system also identifies the potholes which are to be repaired immediately. The system we propose comprises a laser sensor and pressure sensors in shock absorbers to detect and quantify the intensity of a pothole, a centralized server which maintains a database of locations of all the potholes which can be accessed by another unit inside the vehicle. A point to point connection device is also installed in vehicles so that, when a vehicle detects a new pothole, all the vehicles within a range of 20 meters are warned about the pothole. We propose an algorithm which computes a route with least number of potholes which is nearest to the desired destination. If the destination is unknown, then the system will check for potholes in the current road stretch and displays the level of damage. Also, the destination can be added, removed or changed any time during the travel. We prove that the algorithm returns an optimal path.
\end{abstract} 
\begin{IEEEkeywords} Pothole, Weight Multiset, Min-Weight Multiset, Routing, Quick WiFi \end{IEEEkeywords}

\section{Introduction}

Millions of dollars are spent in maintaining and repairing potholes by municipalities around the world\footnote{\url{http://www.bizjournals.com/denver/stories/2007/04/02/story1.html?page=all}}. A pothole refers to a shallow pit on a road's surface, caused by activities like erosion, weather, traffic and some other factors. These anomalies when accumulated in the transportation system, constitutes to major problems. These problems, even though they appear to be less significant at an individual level, constitute to major problems when taken in cumulative, collective and large scale manner. The problems constituted by these potholes result in low fuel economy, accidents, traffic coagulations etc, which have an adverse impact on the economy of a country and day to day life of citizens. It can be proved that the traffic system can be bettered to a greater extent when these problems are checked and taken care of. Detecting Potholes on roads with the help of intelligent systems is a very well-studied problem. Detecting and hence avoiding potholes may reduce the fuel consumption, wear-tear and maintenance cost of a vehicle. Also, avoiding potholes increases road safety and indirectly decreases the total travel time in some cases.The existing systems of pothole detection uses a centralized database approach.

Road conditions are a matter of public concern that have engendered a number of responses from local organizations dissatisfied with the state of their roads. Some of the solutions include: establishing “pothole hot-lines”, holding contests to report particularly bad potholes, and asking readers to contribute pictures of potholes. We seek a more systematic approach to the problem, but hope that this public interest may cause volunteers to carry hardware in their cars.
 
The existing systems propose a way to detect potholes. But, the vehicles travelling on these damaged roads eventually take damages. We try to reduce this problem by proposing a novel conceptual framework where the detection, avoidance and maintenance of potholes are taken care of to a great possible extent. Our system accounts for a better,faster and reliable method in detection, avoidance and maintenance of potholes. Thus this architecture, when employed brings about a radical change in existing roadways transportation system.

\subsection{Related Work}
There are several vehicular sensing methods for pothole detection. Some of these use accelerometers or lasers for data acquisition. This section contains a short review of pothole detection and avoidance algorithms implemented. Some techniques use image processing for the detection.\\
 
Mertz et. al., in \cite{a} uses service vehicles to detect road damages. The system consists of a structured light sensor and a camera mounted on vehicles that travel the roads on a regular basis. It makes use of sensors and equipment already present on the vehicle, like GPS on transit buses. The data is collected from many vehicles, aggregated and analysed at a central location and the assessment results are displayed interactively to facilitate road maintenance operations.
Rode et. al., \cite{b} proposed a pothole detection and warning system which is divided into three subsystems. First is sensing subsystem which senses the potholes encountered by it, about which it did not have the prior information. Second is communication subsystem which handles the information transfer between Wi-Fi Access Point and Mobile Node. Third subsystem is the localization subsystem which analyzes the data received from Access Points and warns the driver regarding the occurrence of potholes.
De Zoysa et al., in \cite{d} proposes a method of detecting potholes where pothole data and its GPS coordinates are stored in the sensor. This data stored is  uploaded to the base station  when the bus reaches the station. The station acts as an central hub and in turn is handed over to the buses that travel in that route.\\
An image processing approach was proposed by Ajit et.al., in \cite{c} where the potholes are first photographed or recorded using camera on a car. This image processes the road such that the pothole appears as a distinct black colour. The images can be filtered to remove unnecessary objects like other cars using proper pattern matching. This efficiently detects the presence of potholes on the roads. Jengo et. al., in\cite{j} uses hyperspectral imagery to characterise the quality of roads, the approach is of semi-automated in nature.\\

Smartphones have been used for effective detection and avoiding mechanisms. An algorithmic approach was proposed by Eriksson et. al.,\cite{e} in addition to static road sensing.It describes the design, implementation, and experimental evaluation of Pothole Patrol, a mobile sensor computing system to monitor and assess road surface conditions.\\
An approach where accelerometers in Android Smartphones are used to detect potholes \cite{f}. The system describes a mobile sensing system  for road irregularity detection using Android OS based smart-phones. It maps the road system by using smartphones and does statistical analysis of the mapped data using existing RoadMic Technology. Strazdins et. al., in \cite{g} test the feasibility of such Android Smartphones for pothole detection.\\

The problem is well studied for autonomous robots too. A detection-avoidance mechanism was introduced for the navigational aid of autonomous vehicles\cite{h}. In this paper, they discuss a solution to detection and avoidance of simulated potholes in the path of an autonomous vehicle operating in an unstructured environment.
An obstacle avoidance system was developed for a custom-made autonomous navigational robotic vehicle (ANROV), based on an intelligent sensor network and fuzzy logic control\cite{i}.

\subsection{Definitions}
\subsubsection{Street Network} A street network is a directed multi-graph $G_d=(V,E_d)$ where $V$ is the set of nodes which correspond to interconnecting points and $E_d$ is the set of edges(arcs) which correspond to interconnecting lines that represent a system of streets for a given area. Here $|V|=N$ where N is the number of nodes. The graph is directed to account for multiple lanes and two-way streets between any 2 vertices. 

\subsubsection{ Weight Multiset and Min-Weight Multiset of a Directed Multigraph}
We borrow the idea of Weight Multiset and Min-Weight Multiset from Siddhartha Sankar Biswas et al.\cite{dij}.\\
Suppose that there are $n$ number of arcs from the vertex $u$ to the vertex $v$ in a directed multigraph $G_d$, where $n$ is a non-negative integer. $W_{uv}$ is the set whose elements are the arcs between vertex $u$ and vertex $v$, keyed and sorted in non-descending order by the value of their respective weights. $$\therefore W_{uv} = \{(uv_1, w_{1uv}), (uv_2, w_{2uv}),(uv_3, w_{3uv}),..$$
$$.., (uv_n, w_{nuv})\}$$, where, $uv_i$ is the arc-i from vertex $u$ to vertex $v$ and $w_{iuv}$ is weight of that arc, for i =\{1, 2, 3,.., n\}. If two or more number of weights is equal then ties are broken arbitrarily. 
\\The multiset $\{w_{1uv}, w_{2uv}, w_{3uv}, ... ,w_{nuv}\}$ is also denoted by the name $W_{uv}$. This $W_{uv}$ is called the 'Weight-Multiset'.Here $w_{uv}$ is the min value of the multiset $W_{uv} =\{w_{1uv}, w_{2uv}, w_{3uv}, ... ,w_{nuv}\}$. Clearly, $w_{uv} = w_{1uv}$, as the multiset $W_{uv}$ is already sorted. The collection of all $w_{uv}$ in a directed multigraph forms a multiset which is denoted by $W$ and is called by ‘Min-Weight Multiset’ of the Multigraph.
\\
\\
For simplicity, we denote $w_{iuv}$ as $w(e)$, where $e$ is the $i$-th arc between $u$ and $v$.

\subsubsection{Pothole Identification Number}
Each pothole will be assigned a unique pothole identification number $Id$ which will be linked to its location i.e, the arc on which it is present. Any transaction/ process for a pothole will be done through this number.

\subsection{Brief overview of the results}
We propose a method where the detection, avoidance and maintenance of potholes are taken care of to a great possible extent. We use a peer to peer communication, along with the centralized database approach. The vehicle will be routed such that the potholes in the path will be minimum. The potholes are regularly reported to the municipal corporation based on the number of hits. Hence the system accounts for a better,faster and reliable method in detection, avoidance and maintenance of potholes.

\section{Our Approach}
\begin{figure}[!t]
\centering
\includegraphics[width=9cm, height=5.5cm]{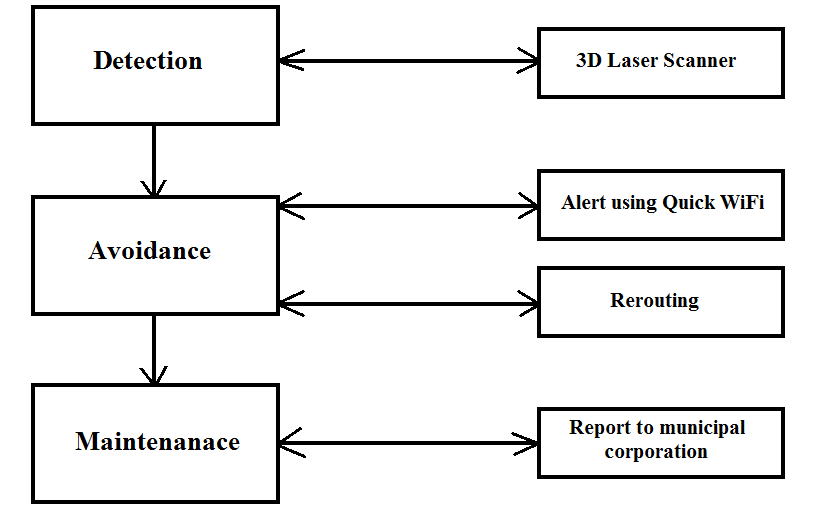}\label{1}
\caption{The proposed system with 3 components.}
\end{figure}
The proposed system is a conceptual framework which can be implemented for vehicles in metropolitan cities. The system consists of three components namely Detection, Avoidance and Maintenance of the potholes. Figure \ref{1} is a block diagram of the complete system. The components are implemented using a set of agent(s). Each component will be explained in detail in the following sections.  

\subsection{Detection}
\subsubsection{Pothole \textit{Detection} With Lasers}
We attach a laser detector and transmitting device to the vehicles' under carriage. The device utilises a laser to detect the depth of the pothole. A 3-D laser scanner \cite{a} operates by sweeping a laser across the scene in two dimensions. At each pixel, the instrument measures the time that it takes for a laser beam to leave the sensor, strike a surface, and return. The sensors also provide an intensity measurement at each pixel by measuring the energy of the returned laser signal. Thus, a full sweep in two-dimensions can provide both a depth map and an intensity image using Laser Detection and Ranging mechanisms \cite{yy}. The depth and location of a pothole are linked to a pothole identification $Id$.
 \subsubsection{Encryption and Decryption}
A data encryption and decryption system securely geoencrypts data using location-dependant navigation signals. To prevent any brute-force attack against the cryptographic key, we use largely time-independent characteristics of the navigation. It includes a geoencryption apparatus\cite{x} that includes the data file, consisting of the depth map and the intensity image, the location of the device along with a random key in order to create a coded file. The decryption decrypts the same using RSA algorithm\cite{rsa}. It discards the random number and utilises the location, depth map and intensity image for further processing.

\subsection{Avoidance}
\subsubsection{Warning: Local Transmission of Data}
To transfer data locally between the vehicles we use Cabernet\cite{z}. Cabernet is a system for delivering data to and from moving vehicles using open 802.11 (WiFi) access points(AP) encountered opportunistically during travel.Network connectivity in Cabernet is both fleeting (where access points are typically within range for a few seconds)and intermittent. The advantages of using Quick WiFi is that data transfers can occur in broadband speeds.

As a car drives down the road, the on-board embedded device repeatedly scans for, and attempts to associate with, open APs. It then attempts to establish end-to-end connectivity with a Cabernet enabled host, to retrieve or upload the data obtained. When a vehicle moves, the connections are typically brief. The device associates with an AP and maintains the connection until the access point went out of range. In the connection establishment process each step involves a request, followed by a response. If no response is received within the specified time, the request is retransmitted. QuickWiFi attempts to associate with the first open access point it encounters as it scans through the wireless channels.Open APs require the connection to be authenticated before the transmission can occur. In case the authentication is lost, the process is restarted.Upon success, QuickWiFi explicitly notifies applications that connection is available and the data is transmitted.We quickly discover that a connection is lost, if we have not seen any transmissions for 500 milliseconds, it is likely that the car has moved out of the range of the AP. When this happens, we are rescanning again to transmit data to other vehicles within the range.

\subsubsection{Transmission of Data to the Central Server}
Each device is a mini-server that supports a set of functions and can process portions of the queries\cite{y}. It is also connected to a central database which manages the pothole location discovered along with its depth. The database will store a tuple (Depth, Location, Intensity) of potholes detected and a pothole $Id$ will be pointing this tuple. On querying, the database return results immediately. They monitor the roads as a continuous phenomenon. The central database system accepts a query regarding the condition of the road. It produces a query execution plan for the query and executes this plan against the database containing the information about the location and depths of potholes.The execution plan is the internal blueprint for evaluating a query. The answer to the query is encrypted and then send to the requesting devices.Query processing takes advantage of the computing capabilities at the devices in order to  minimise reaction time. Figure \ref{2} shows the instance where all the vehicles are connected to each other and to the server through Cabernet.
\begin{figure}[!t]
\centering
\includegraphics[width=9cm, height=5.5cm]{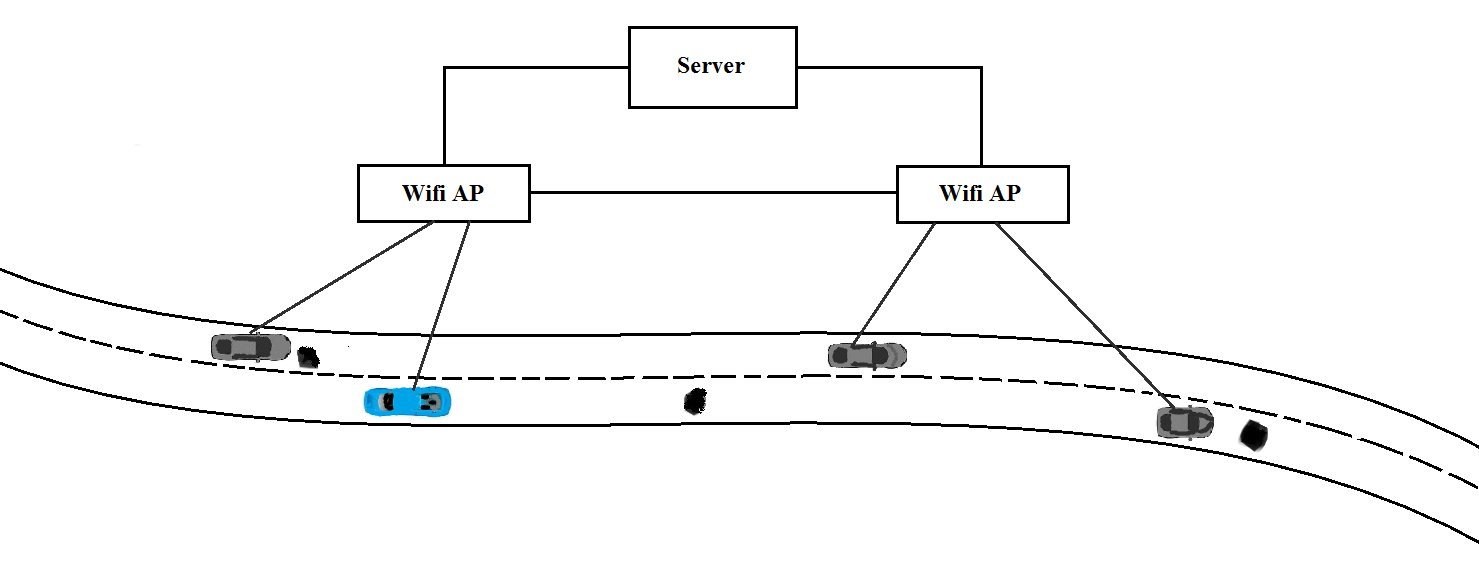}\label{2}
\caption{An instance where vehicles are connected to the server through a wifi access point. Here a new pothole is detected and is updated to the server by the rightmost and the leftmost vehicles.}
\end{figure}
\subsubsection{Preprocessing of the Street Network}
The street network will be preprocessed and weights will be added to all the arcs. These weights are based on the depth of the potholes( damage level of the arc) and the length of the arc which s the actual physical distance between the 2 vertices considered. The weights will be updated for every update of a pothole in the server. Algorithm \ref{alg:1} is used for the preprocessing. Here the weight $w_{iuv}$, of a given $i$-th arc $e$ between any 2 vertices $(u,v)$, is given by the product of the average damage of the arc and its length i.e., $$ w_{iuv}=d \times l $$. The average damage of an arc is given by $$d =\frac{\sum_{j=1}^{a} b_j} {a} $$ where $a$ is the number of potholes on the arc and $b_j$ is the depth of $j$-th pothole. We consider the product of $d$ and $l$ to account for both the damage and the distance. For an optimal path, both damage and the distance are to be minimum and hence the weight $w_{iuv}$ accounts for a better measure.    
\begin{algorithm}
\caption{Preprocessing of the Street Network}
\label{alg:1}
\begin{algorithmic}[1]
\REQUIRE Street Network $G_d=(V,E_d)$, Weight Multiset $W_{uv}$ for all $(u,v) \in V$  and Pothole Identification $Id$ for all the potholes in the Database as input.
\FOR{all $e \in E_d$}
\FOR{all $Id$ in the database}
\IF{Pothole with identification number $Id$ is on arc $e$} 
\STATE Arc-damage sum, $d'(e)= d'(e)+ b(Id)$ where $b(Id)$ is the depth of the pothole on arc $e$ with identification $Id$.
\ENDIF
\ENDFOR
\ENDFOR
\FOR{all $e$ between any $(u,v)$}
\STATE Average arc-damage, $d(e)= \frac{d'(e)}{a}$ where a is the number of potholes on arc $e$.
\STATE Weight of arc, $w(e)=d(e) \times l(e)$ where $l(e)$ is the length of the arc $e$ obtained from the weight multiset ($w(e)=w_{iuv}$ assuming $e$ to be the $i$-th arc between $u$ and $v$).
\ENDFOR
\RETURN Weighted Multi-graph $G_d'=(V,E_d)$ of Street network $G_d$.\\
\end{algorithmic}
\end{algorithm}

\subsubsection{Routing the Vehicle through Optimal Path}
The device in the vehicle will have a display unit, which shows the street network as a map to the driver. The system is flexible enough that the destination can be set at the source, while driving and in addition to it, the destination can be modified. If the destination is set at the source, a path which has least weight will be displayed. If its not set, then the weight of the arc on which the vehicle is travelling will be displayed. If the destination is modified, i.e., changed or deleted, one of the above 2 processes will be called. Algorithm2 will be followed for routing. In Algorithm \ref{alg:2}, we use Generalised Dijkstra's algorithm(GDA) for multi-graphs \cite{dij} to route the vehicle. This GDA algorithm returns shortest path from a source to all other vertices.
\begin{algorithm}
\caption{Routing to Destination}
\label{alg:2}
\begin{algorithmic}[1]
\REQUIRE  Weighted Multi-graph $G_d'=(V,E_d)$ , Min-Weight Multiset $W$, Source $S$ and Destination $D$ as input.\\
\IF{Destination $D=\emptyset$}
\STATE Get the GPS data of the vehicle i.e, Arc $e$ on which the vehicle is travelling.
\RETURN The weight $w(e)$. 
\STATE Repeat steps 2 and 3 until the destination is either Set or Reached for every node crossed.
\ELSE
\STATE Run GDA($G_d'$,$W$,$S$).
\RETURN The path between $S$ and $D$ obtained from GDA algorithm along with its weight.
\IF{$D$ is modified}
\STATE Repeat step 6 and 7.
\ENDIF
\ENDIF
\end{algorithmic}
\end{algorithm}

\begin{theorem}
Algorithm \ref{alg:2} always returns an optimal path.
\end{theorem}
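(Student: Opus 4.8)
The plan is to reduce the statement to the correctness of the Generalised Dijkstra's Algorithm (GDA) invoked in line 6, whose correctness for multigraphs we are entitled to assume from \cite{dij}. First I would fix terminology: call a path $P = (S = v_0, v_1, \ldots, v_k = D)$ \emph{optimal} if its total weight $w(P) = \sum_{j=0}^{k-1} w(e_j)$, where $e_j$ is the arc chosen between $v_j$ and $v_{j+1}$, is minimum over all $S$--$D$ paths in $G_d'$. The empty-destination branch ($D = \emptyset$, lines 1--4) requests no route and merely reports the weight of the current arc, so optimality is vacuous there; the substantive content is the branch $D \neq \emptyset$ in which GDA is called. I would therefore concentrate on showing that the path GDA returns between $S$ and $D$ attains this minimum.

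The first real step is an exchange argument showing that the multigraph problem collapses to an ordinary shortest-path problem on a simple digraph, which is what justifies routing on the Min-Weight Multiset $W$. Suppose an optimal path used, between some consecutive pair $(v_j, v_{j+1})$, an arc $e$ whose weight exceeds $w_{v_j v_{j+1}}$, the minimum of the Weight-Multiset $W_{v_j v_{j+1}}$. Replacing $e$ by the minimum-weight arc strictly decreases $w(P)$ without changing the vertex sequence, contradicting optimality. Hence every optimal path uses only minimum-weight arcs, so optimal paths in $G_d'$ are in exact correspondence with optimal paths in the simple weighted digraph $\widehat{G}$ on $V$ whose single edge from $u$ to $v$ carries the weight $w_{uv}$ drawn from $W$. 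This is precisely the structure on which GDA operates.

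Next I would verify that the hypotheses under which GDA is proved correct in \cite{dij} are met here, the crucial one being non-negativity of edge weights. By Algorithm \ref{alg:1}, each weight is $w(e) = d(e)\,l(e)$, where the average damage $d(e)$ is an average of pothole depths $b_j \ge 0$ and the length $l(e) > 0$; hence $w(e) \ge 0$ for every arc, and consequently every entry of $W$ is non-negative. With non-negative weights, correctness follows from the standard Dijkstra invariant: one maintains a growing set $F$ of finalized vertices whose tentative labels $\delta(\cdot)$ equal their true distances from $S$, and shows by induction on $|F|$ that the vertex $u$ extracted with least $\delta(u)$ satisfies $\delta(u) = \mathrm{dist}(S,u)$; indeed any alternative $S$--$u$ path must first leave $F$ at a vertex whose label is already at least $\delta(u)$, and the remaining arcs, being non-negative, cannot lower the total below $\delta(u)$. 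Taking $u = D$ yields an optimal $S$--$D$ path.

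The main obstacle is not the invocation of the cited Dijkstra correctness but the reduction preceding it: one must argue carefully that restricting attention to minimum-weight arcs loses no optimal path (handled by the exchange argument above) and that GDA, which returns shortest paths to \emph{all} vertices, in particular returns an optimal $S$--$D$ path. Finally, since a modification of $D$ (lines 8--9) simply re-invokes GDA on the current $G_d'$, the same argument applies verbatim after each change, so the algorithm returns an optimal path at every stage of travel.
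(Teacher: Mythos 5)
Your proposal is correct, and its skeleton matches the paper's: both arguments bottom out in the correctness of the Generalised Dijkstra's Algorithm of \cite{dij}. The difference is that the paper's own proof is essentially an assertion --- it states that the weights $w(e)=d(e)\times l(e)$ encode both damage and distance, that GDA returns a shortest path between $(S,D)$, and concludes optimality, without ever defining ``optimal,'' analyzing the multigraph structure, or checking the hypotheses of the cited algorithm. You supply exactly the mathematical content the paper omits: a precise definition of an optimal path as one of minimum total weight $\sum_{j} w(e_j)$; the exchange lemma showing that any optimal path may be assumed to use only minimum-weight arcs between consecutive vertices, which collapses the multigraph $G_d'$ to the simple digraph weighted by the Min-Weight Multiset $W$ on which GDA actually operates; the verification that $w(e)=d(e)\,l(e)\ge 0$ (since depths $b_j\ge 0$ and lengths $l(e)>0$), which is the non-negativity hypothesis any Dijkstra-type correctness argument requires; and the dispatch of the $D=\emptyset$ and destination-modified branches, which the paper's proof ignores entirely. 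In short, the paper buys brevity by outsourcing everything to \cite{dij} plus the informal claim that the weight ``accounts for'' damage and distance, while your version is a self-contained argument whose only imported fact is the standard Dijkstra invariant, made explicit where the multigraph-to-simple-graph reduction enters. One small point worth flagging in your non-negativity step: Algorithm \ref{alg:1} computes $d(e)=d'(e)/a$, so arcs with $a=0$ potholes need the convention $d(e)=0$ for both the algorithm and your check to be well defined.
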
 
\begin{proof}
The weights to the multi-graph $G_d'=(V,E_d)$ is added such that both the length of an arc $l(e)$ and its average depth $d(e)$ are minimal. This results in the weight of an arc $w(e)= l(e) \times d(e)$ to be minimum. Generalized Dijkstra’s algorithm returns a path which is the shortest between given source-destination pair $(S,D)$. The path is said to be optimal since it bears a weight which accounts for less damage and a shorter distance. Hence Algorithm \ref{alg:2} returns an optimal path always. 
\end{proof}

\subsection{Maintenance: Determining Traffic Intensity using server updates for a particular pothole}
The location of every pothole is updated by every vehicle which communicates the central the database. We utilise this redundant effectively to find the traffic intensity of the roads. Traffic Intensity is equal to the number of traffic updates for a particular location per minute. The traffic intensity on a particular location is compared by the number of pothole updates received for a particular location in one minute. This gives an approximate idea of the traffic intensity in the damaged roads. This data is properly exploited to prioritise the potholes on the level of threat to traffic intensive roads.  The classified roads can be intimated to the corresponding authorities for action to be taken upon. 
 
Figure \ref{3} shows the complete system where a tuple (Depth, Location, Intensity) will be returned by the system upon detection of a pothole to the server. This tuple for a pothole is used by other vehicle for avoidance and maintenance purpose. 
\begin{figure}[!t]
\raggedleft
\includegraphics[width=8.5cm, height=5.5cm]{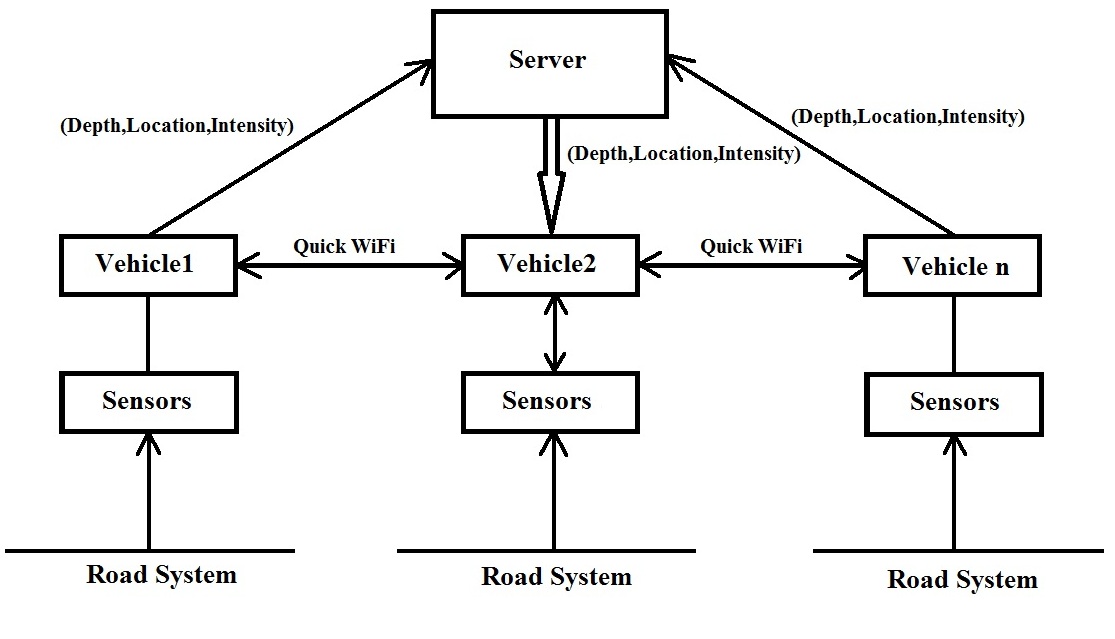}\label{3}
\caption{The proposed system as a whole.}
\end{figure}
\section{Conclusion}
This paper describes laser-based detection and avoidance of potholes using routing algorithms and road damage level predictions. It also prioritises the potholes in important roads based on traffic intensity. The method we use exploits the current location of the device and sends it to other vehicles using Quick Wifi APs. The same data is also forwarded to the central server. Thereby increasing the accuracy of the exact location of the pothole. The road damage level predicting algorithm gives a scale of damage for the road ahead. The routing algorithm returns an optimal path to the destination with minimal damage and a shorter distance.

\section{Further Work}
The data in the central server can be exploited by concerned authorities for remedial action on the potholes. The verification of the remedial action can also be taken by detecting the presence of potholes in the particular location. Also the efficiency of interaction of vehicles through Quick Wifi can be improved to reduce data losses. The routing for avoidance can be studied in detail and efficient algorithms can be proposed. Moreover, the system is still a theoretical proposition which can be implemented in metropolitan cities. 

\section{Acknowledgement}
The authors would like to thank M.S.Ramaiah Institute of Technology, Prof.K.G.Srinivasa( Head, Dept of CSE, M.S.Ramaiah Institute of Technology) the support and motivation. The authors would also like to thank Dr. Ranjini Tolakanahalli( Medical Physicist, Hamilton Health Sciences) and Mr. Balakunatala Prasanna( Chair, IEEE Region 1) for the review and suggestions.

\end{document}